\documentclass{article}

 \usepackage[preprint]{neurips_2026}

\usepackage[utf8]{inputenc} 
\usepackage[T1]{fontenc}    
\usepackage{hyperref}       
\usepackage{url}            
\usepackage{booktabs}       
\usepackage{amsfonts}       
\usepackage{nicefrac}       
\usepackage{microtype}      
\usepackage{xcolor}         
\usepackage{mathtools}
\usepackage{subcaption}
\usepackage{siunitx}

\usepackage{colortbl}
\usepackage{enumitem}

\usepackage{tikz}
\usetikzlibrary{arrows.meta, positioning}
\usetikzlibrary{calc,shapes}

\usepackage{amsmath}
\usepackage{amssymb}
\usepackage{mathtools}
\usepackage{amsthm}

\usepackage{algorithm}
\usepackage{algorithmic}

\usepackage{hyperref}
\usepackage{xurl}

\usepackage{wrapfig}

\theoremstyle{plain}
\newtheorem{theorem}{Theorem}[section]

\newtheorem{lemma}[theorem]{Lemma}
\newtheorem{corollary}[theorem]{Corollary}
\theoremstyle{definition}
\newtheorem{definition}[theorem]{Definition}

\theoremstyle{remark}

\title{Emergence of Physical Intelligence \\ via Controllable Information Production}

\author{%
  Tristan Shah \\
  Department of Computer Science\\
  Texas Tech University\\
  \texttt{trisshah@ttu.edu} \\
  \And
  Stas Tiomkin\thanks{Corresponding Author} \\
  Department of Computer Science \\
  Texas Tech University \\
  \texttt{stas.tiomkin@ttu.edu} \\
}

\begin{document}

\maketitle


\begin{abstract}

    Intrinsic Motivation (IM) aims to train agents without external rewards, enabling useful behavior to emerge from the agent's interaction with its environment alone. However, the dominant IM approaches rely on information-theoretic quantities with designer-chosen variables, introducing bias and lacking a principled connection to dynamics or optimal control (OC). We introduce Controllable Information Production (CIP), a new foundation for IM explicitly grounded in dynamical systems and OC. CIP measures the rate at which an agent produces information, capturing controllable complexity without external knowledge or bias. CIP unifies IM and OC into a single framework, formalizing physical intelligence as the control of information production. It further reveals connections between the structure of the value function and Kolmogorov-Sinai entropy. CIP consistently outperforms prior IM methods on standard benchmarks in robot learning and solves tasks they fail on, including humanoid self-righting. These results support a general organizing principle: physical intelligence emerges from driving systems toward the edge of controllable chaos. Additional details and code are available at our project webpage: \href{https://controllable-information-production.netlify.app}{\color{magenta}\nolinkurl{controllable-information-production.netlify.app}}.

\end{abstract}

\begin{figure}[H]
\vspace{-0.7cm}
    \centering
    \includegraphics[width=\linewidth]{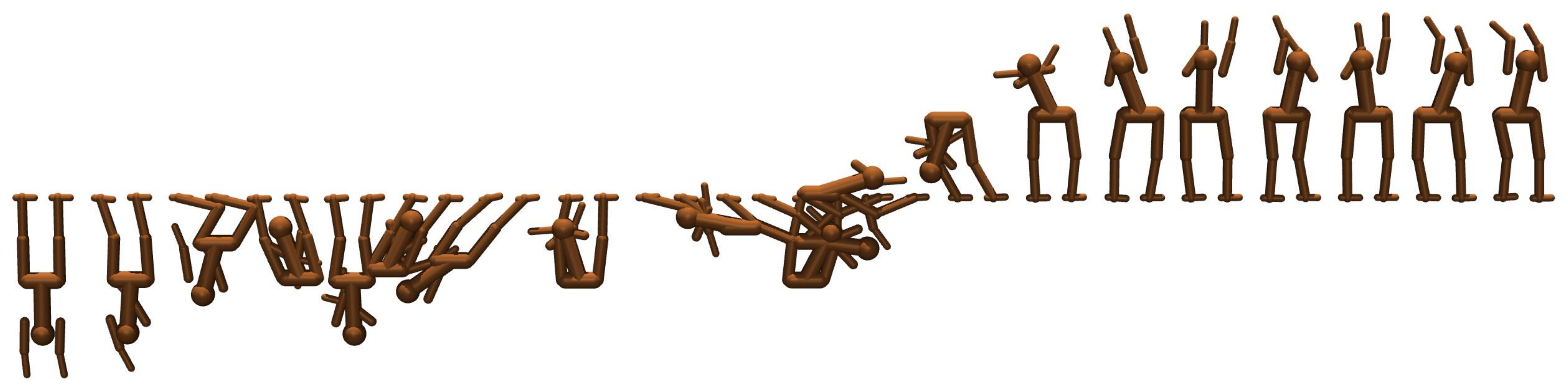}
    \caption{A humanoid-like agent discovers self-righting and stable upright behavior using Controllable Information Production (CIP) as its sole intrinsic motivation signal.}
    \label{fig:gibbon}
\end{figure}

\section{Introduction}\label{sec:introduction}


Robot learning has achieved substantial empirical success, yet its dependence on engineered reward signals and human operator demonstrations remains a fundamental bottleneck \citep{singh2019end,park2024automatic, mandlekar2023mimicgen, walia2025armimic}. These human interventions inject priors that limit the discovery of genuinely novel behaviors. More objectively, human expertise is not always available, and for some tasks humans may not know the solution themselves. Intrinsic Motivation (IM) offers an alternative: paradigm in which intelligent behavior emerges from the agent's own interaction with a dynamical system, without externally specified goals. A range of information-theoretic objectives such as Empowerment \citep{klyubin2005empowerment}, curiosity \citep{schmidhuber2010formal}, information gain \citep{sukhija2025maxinforlboostingexplorationreinforcement}, diversity \citep{eysenbach2018diversityneedlearningskills, sharma2019dynamics}, and predictive information \citep{tishby2010information} have been proposed for IM and successfully applied to control problems including stabilization \citep{tiomkin2024intrinsic}, object manipulation \citep{ferraro2025focus}, locomotion \citep{karl2017unsupervisedrealtimecontrolvariational}, and transportation \citep{papala2024decentralized}. Despite these successes, several fundamental limitations remain.

Current IM methods suffer from three structural deficiencies. First, they are predominantly based on mutual information, which is at best a proxy for system-level phenomena in the underlying dynamical system such as complexity, stability, or chaos. Second, they require a designer to explicitly specify random variables over which to calculate mutual information. The resulting IM is biased by a designer-specified information interface which implicitly constrains how the agent perceives and acts. Third, although these objectives are routinely used to drive control, they have no principled connection to optimal control (OC) theory which limits interpretability and efficient methods based on OC principles. Ultimately, these objectives are postulated based on how human designers believe intelligent agents should behave rather than emerging from the very field that formalizes what it means for an agent to be in control \citep{tassa2012synthesis, todorov2005generalized, li2004iterative}.

We introduce a new IM paradigm derived directly from invariant properties of dynamical systems. Rather than quantifying information transfer between hand selected random variables, we measure the rate at which the controlled system produces information along its trajectories. This rate is given by the Kolmogorov-Sinai entropy (KSE) \citep{Kolmogorov1958, Sinai1959} induced by the agent’s actions, yielding the objective we call \textbf{Controllable Information Production (CIP)}.

CIP eliminates the above deficiencies. CIP is an intrinsic invariant of the dynamics, not a proxy defined over arbitrarily chosen random variables. Information production is defined at the level of the dynamical system itself, removing the need to select random variables altogether. Finally, CIP is not postulated: we show that it emerges directly from OC theory, where it is linked to the curvature of the optimal value landscape under an arbitrary control objective. This connection also yields an efficient and scalable estimator for KSE, which has historically been challenging to compute in high dimensions \citep{murphy2024machine, SHIOZAWA2024129531, baptista2010kolmogorov, wolf1985determining}, limiting its applicability in systems such as robotics, brain activity, and global climate models.



Conceptually, CIP shifts the objective of IM from promoting novelty or diversity to identifying and stabilizing \textit{salient} regions of state space. These are regions that are intrinsically unstable or chaotic in open loop, yet controllable under feedback. Such regions confer high agency: for example, the upright configuration of a humanoid enables a wide range of behaviors while remaining highly sensitive to small perturbations. More generally, intelligent behavior in nature is often associated with operating near the \textit{edge of chaos} \citep{tower2024selectively, wissner_2013}, where systems balance instability and control. By maximizing the information production an agent can control, CIP provides a principled mechanism for driving agents toward and maintaining operation within these regimes.


Our main contributions are as follows:\vspace{-0.3cm}
\begin{enumerate}[leftmargin=*, itemsep=2pt, parsep=0pt, topsep=4pt]
    \item We introduce a new intrinsic motivation (IM) paradigm based on Controllable Information Production (CIP), driving the emergence of useful physical behaviors.
    \item We connect CIP to value function structure, grounding IM in optimal control (OC) and enabling an efficient, scalable estimator with a numerically stable optimization method.
    \item We establish state-of-the-art performance in intrinsic motivation: on reward-free pretraining in URLB \citep{laskin2021urlb}, CIP outperforms prior methods and uniquely enables useful physical skills such as self-righting.
\end{enumerate}

\section{Preliminaries}\label{sec:kse}

Kolmogorov-Sinai entropy (KSE) quantifies the rate at which a dynamical system $f: \mathcal{X} \to \mathcal{X}$ produces new, distinguishable trajectories \citep{Kolmogorov1958,Sinai1959,Walters1982,Petersen1983}, and is strictly positive for chaotic dynamics. A finite measurable partition $\xi = \{P_1, \ldots, P_k\}$ of state space $\mathcal{X}$ acts as a coarse measurement: each step of the dynamics splits cells of $\xi$ into finer pieces, so longer horizons distinguish more trajectories. 
%
\begin{equation*}
    h_{\mbox{ks}}(f) \coloneq \sup_{\xi} \lim_{T \to \infty} \frac{1}{T} H\!\left( \bigvee_{t=0}^{T-1} f^{-t} \xi \right).
\end{equation*}
%
Here $H(\cdot)$ denotes the Shannon entropy of the partition $\xi$, $f^{-t}\xi$ its preimage under $t$ steps of the dynamics, and $\bigvee$ denote the join of partitions. KSE is the supremum, over all finite partitions, of the per-step entropy growth rate of the joined partition.

The partition-based definition is rarely tractable due to the supremum over $\xi$ \citep{murphy2024machine}. It can instead be calculated by the sum of positive Lyapunov exponents (LEs) of the system, denoted as $\lambda_i$, which are based on sensitivity analysis \cite{wolf1985determining}. Pesin's theorem \citep{pesin1977characteristic} formalizes this connection:
\begin{equation}\label{eq:sum_of_positive}
    h_{\mbox{ks}}(f) = \sum_{\lambda_i > 0} \lambda_i.
\end{equation}
However, standard estimators of LE scale poorly to high-dimensional systems \citep{skokos2009lyapunov}, and evaluating Equation~\eqref{eq:sum_of_positive} requires resolving the full spectrum and discarding the negative exponents; no existing method tracks the positive sum directly. One of our contributions closes this gap.

In this work we are interested in the information production induced by the agent on the dynamical system. We now distinguish two KSE-based quantities corresponding to information production under closed and open-loop control.

\paragraph{Open and Closed Loop Control.} We consider a dynamical control system $f: \mathcal{X} \times \mathcal{U} \to \mathcal{X}$ where $\mathcal{U}$ is the space of admissible controls. Under a feedback policy $u = \pi(x)$, the system reduces to: $ f^{\mathbf{cl}}(x) = f(x, \pi(x))$. Its sensitivity, required for the calculation of KSE, is governed by the total derivative of the closed-loop dynamics \citep{mayne_1966}:
\begin{equation*}
    \frac{d}{dx} f^{\mathbf{cl}}(x) = f_x + f_u \pi_x,
\end{equation*}
where $f_x \coloneq \frac{\partial f}{\partial x}$ and $f_u \coloneq \frac{\partial f}{\partial u}$ are the Jacobians of the dynamics with respect to state and control and $\pi_x = \frac{\partial \pi}{\partial x}$ is the Jacobian of the policy. The closed loop KSE $h_{\mbox{ks}}(f^{\mathbf{cl}})$ measures the information production rate that exists under feedback control: it quantifies the residual instability the policy fails to suppress. An optimal controller drives this quantity down by canceling unstable perturbations as we prove in Theorem \ref{theorem:controllable_information_production}. 

In the open-loop system the control sequence is state independent. Its sensitivity is governed by the open-loop Jacobian alone:
\begin{equation*}
    \frac{d}{dx}f^{\mathbf{ol}}(x) = f_x,
\end{equation*}
with no $\pi_x$ term, since the control sequence does not respond to perturbations in $x$. The open-loop KSE $h_{\mbox{ks}}(f^{\mathbf{ol}})$ measures the information production rate if perturbations around the trajectory are left unregulated. 
This is the quantity we propose as our intrinsic motivation signal which is formally stated in Definition \ref{def:cip}.




In the next section we show how CIP emerges from OC which is outlined by the roadmap in Figure~\ref{fig:proof_map}.

\section{Derivation of Controllable Information Production}\label{sec:theory}

{\noindent{\bf Overview.}} In this section, we show that CIP is encoded in the Hessian of the value function in OC, defined for arbitrary costs. This arises from decomposing the optimal policy into extrinsic (goal-directed) and intrinsic (perturbation-regulating) components, the latter governed by value curvature. By further decomposing the value Hessian into open- and closed-loop terms, we derive a Riccati equation characterizing controllable perturbation growth which calculates positive LE, yielding a tractable formulation of KSE. This connection enables a stable and efficient recursive estimator of CIP, validated on chaotic attractors with known entropy. Figure~\ref{fig:proof_map} outlines the derivation pipeline.
\begin{figure}[H]
\vspace{-0.25cm}
\centering
\begin{tikzpicture}[
    scale = 0.8,
    every node/.style={font=\footnotesize},
    box/.style={draw, rectangle, rounded corners, align=center, fill=blue!5!white},
    circ/.style={draw, circle, minimum size=1cm, align=center},
    arrow/.style={->, thick, black, line width=1.0pt},
    line/.style={thick, black},
    labelnode/.style={font=\footnotesize, align=center}
]
\node[box] (root) {Policy \\ Decomposition \\ Lem.~\ref{lemma:policy_decomp}};
\node[box, fill=blue!5!white] (intrinsic) at ($(root.east)+(2.0cm,0.75cm)$) {Intrinsic Part\\$\pi_{x_t}$ in Eq.~\eqref{eq:policy_decomp}};
\node[box, fill=blue!5!white] (extrinsic) at ($(root.east)+(2.0cm,-0.75cm)$) {Extrinsic Part\\$d_t$ in Eq.~\eqref{eq:policy_decomp}};
\node[box, fill=blue!5!white] (hessian) at ($(intrinsic.east)+(2.0cm,0)$) {Value Hessian \\ Decomposition \\ Lem.~\ref{lemma:value_decomp}};
\node[box, fill=blue!5!white] (ol) at ($(hessian.east)+(2.50cm,0.75cm)$) {Open-Loop\\Entropy, Eq.~\eqref{eq:open_loop_entropy}};
\node[box, fill=blue!5!white] (cl) at ($(hessian.east)+(2.50cm,-0.75cm)$) {Closed-Loop\\ Entropy, Eq.~\eqref{eq:closed_loop_entropy}};
\node[box, fill=blue!5!white] (cip) at ($(ol.east)+(2.5cm,0)$) {Controllable \\ Information Production \\ Def.~\ref{def:cip}};
\draw[arrow] (root.east) -- (intrinsic.west);
\draw[arrow] (root.east) -- (extrinsic.west);
\draw[arrow] (intrinsic.east) -- (hessian.west);
\draw[arrow] (hessian.east) -- (ol.west);
\draw[arrow] (hessian.east) -- (cl.west);
\draw[arrow] (ol.east) -- (cip.west);
\end{tikzpicture}

\caption{Overview of the CIP derivation. The optimal policy is decomposed into intrinsic (feedback) and extrinsic (drift) components (Lemma~\ref{lemma:policy_decomp}). Analysis of the value Hessian then separates open- and closed-loop entropy production rates (Lemma~\ref{lemma:value_decomp}). The open-loop component defines CIP (Definition~\ref{def:cip}), which upper-bounds the closed-loop KSE (Theorem~\ref{theorem:controllable_information_production}).}
\label{fig:proof_map}
\end{figure}
\vspace{-0.5cm}
\paragraph{Perturbation Regulation in Optimal Control.}
CIP is defined through the growth of infinitesimal perturbations. We analyze perturbations through local linearization along a nominal trajectory, which is not a restriction of the nonlinear dynamics but the standard approach for sensitivity analysis and exact computation of KSE \citep{oseledets1968multiplicative, sandri1996numerical, shaw1981strange, dieci1997computation}.

The system dynamics:
\begin{equation*}
x_{t+1} = f(x_t, u_t) \quad \text{and} \quad \delta x_{t+1} = f_{x_t}\delta x_t + f_{u_t}\delta u_t
\end{equation*}
describe the evolution of infinitesimal perturbations $\delta x_t$ around a nominal trajectory. A policy $\pi$ induces a value function, $V^\pi$, with an arbitrary running cost, $c_t(x, u)$, and terminal cost $c_T(x)$ \citep{todorov2005generalized, tassa2012synthesis, li2004iterative}:
\begin{equation*}
V^\pi(x_t) = \sum_{k=t}^{T-1} c_k\big(x_k, \pi(x_k)\big) + c_T(x_T), \qquad x_{k+1} = f(x_k, \pi(x_k))
\end{equation*}
with Hessian, $V_{xx_t}^\pi \coloneq \frac{\partial^2 V^\pi(x_t)}{\partial x_t^2}$, which captures the local curvature, i.e., the second-order sensitivity of future cost-to-go with respect to perturbations in the state. It quantifies how infinitesimal perturbations expand or contract under the dynamics and thus governs their propagation. In the following sections we take $\pi$ to be optimal with respect to $V^\pi$.

The key observation is that, for arbitrary costs the optimal value function implicitly encodes both goal-directed behavior and goal-independent perturbation regulation. In particular, its Hessian $V_{xx}^\pi$ governs second-order sensitivity to perturbations and determines how they propagate through the dynamics, independently of the specific task.

\subsection{Intrinsic–Extrinsic Policy Decomposition}

OC policies admit a natural decomposition into goal-directed and goal-agnostic components. We prove that the goal-directed component $d_t$ is a drift term, associated with the gradient of the value function $V_{x_{t+1}}^\pi$, that drives the system to minimize extrinsic cost. The goal-agnostic component $\pi_{x_t}$, associated with the Hessian of the value function $V_{xx_{t+1}}^\pi$, regulates and decays unstable perturbations. Remarkably, for quadratic costs, the Hessian depends only on the gradients of the local dynamics.

\begin{lemma}[Intrinsic-Extrinsic Policy Decomposition]\label{lemma:policy_decomp}

The optimal linear feedback policy is decomposed into a goal-directed extrinsic component and a goal-agnostic intrinsic component:
\begin{align}
    \pi(x_t) &= \bar{u}_t + \underbrace{d_t}_{\textrm{Extrinsic}} + \underbrace{\pi_{x_t}}_{\textrm{Intrinsic}}\delta x_t \label{eq:policy_decomp} \\
    \pi_{x_t} &= -(c_{uu_t} + f_{u_t}^\top V_{xx_{t+1}}^\pi f_{u_t})^{-1}f_{u_t}^\top V_{xx_{t+1}}^\pi f_{x_t} \label{eq:policy_gradient}
\end{align}    
with the Hessian of the value function satisfying the Discrete Algebraic Riccati Equation (DARE):
\begin{equation}\label{eq:dare}
        V_{xx_t}^\pi = c_{xx_t} + f_{x_t}^\top V_{xx_{t+1}}^\pi f_{x_t} -f_{x_t}^\top V_{xx_{t+1}}^\pi f_{u_t} (c_{uu_t} \!+ \!f_{u_t}^\top V_{xx_{t+1}}^\pi f_{u_t})^{-1} f_{u_t}^\top V_{xx_{t+1}}^\pi f_{x_t}
\end{equation}
where $V_{xx_T}^\pi = c_{xx_T}$ is the terminal condition.

\begin{proof}[Proof sketch]
An optimal linear feedback policy separates into a feedforward term involving the value gradient and a feedback term involving the value Hessian. The value gradient depends explicitly on the goal, making the feedforward term extrinsic; the value Hessian satisfies the goal-free Equation~\eqref{eq:dare}, making the feedback term intrinsic. See Appendix~\ref{app:policy_decomp}.
\end{proof}

\end{lemma}
In OC, the Hessians of the cost, $c_{xx_t}$ and $c_{uu_t}$, are assumed to be arbitrary positive definite (PD) matrices and $\pi_{x_t}$ is the Jacobian of the optimal feedback policy. This setting is broadly applicable: arbitrary costs are routinely approximated as quadratic along nominal trajectories \citep{li2004iterative, tassa2012synthesis, todorov2005generalized, zhang2025whole}.

The Hessian of the optimal value function $V_{xx_{t+1}}^\pi$ is often viewed merely as a tool for calculation of $\pi_{x_t}$. In the next section we show that it encodes both open-loop and closed-loop KSE. 

\subsection{Value Function Hessian Decomposition into Information Production Rates}

We show that the backward recursion for the Hessian of the optimal value function admits a decomposition exposing both open-loop and closed-loop information production rates. Each rate is recovered from an auxiliary recursion obtained by removing particular terms from Equation \eqref{eq:dare}.

\begin{definition}[Auxiliary Recursions From the Value Hessian]\label{def:aux_recursions}

Consider the Hessian $V_{xx_t}^\pi$ which satisfies Equation \eqref{eq:dare}. By selectively removing control-dependent terms, two auxiliary backward recursions can be defined:
\begin{align}
    X_t &\coloneq c_{xx_t} + (f_{x_t} + f_{u_t}\pi_{x_t})^\top X_{t+1}(f_{x_t} + f_{u_t}\pi_{x_t}) \label{eq:closed_loop_recursion} \\
    Y_t &\coloneq c_{xx_t} + f_{x_t}^\top Y_{t+1} f_{x_t} \label{eq:open_loop_recursion}
\end{align}
with shared terminal condition $X_T = Y_T = c_{xx_T}$. These recursions are derived in Appendix \ref{app:aux_recursions}.

\end{definition}

\begin{lemma}[Value Hessian Decomposition]\label{lemma:value_decomp}

Under the conditions of Pesin's theorem \citep{pesin1977characteristic} the asymptotic growth rates of the recursions satisfy:
\begin{align}
    \lim_{T\to\infty} \frac{1}{2T}\log\det X_0 &= h_{\mbox{ks}}(f^{\mathbf{cl}}) \label{eq:closed_loop_entropy} \\
    \lim_{T\to\infty} \frac{1}{2T}\log\det Y_0 &= h_{\mbox{ks}}(f^{\mathbf{ol}}) \label{eq:open_loop_entropy}
\end{align}
where $h_{\mbox{ks}}$ denotes KSE.

\begin{proof}[Proof sketch]
Unrolling each backward recursion expresses $X_0$ and $Y_0$ as cumulative sensitivity sums of the form $\sum_{t=0}^{T}\Phi_t^\top W_t \Phi_t$, where $W_t$ is a uniformly bounded positive-definite weighting determined by $c_{xx_t}$ and $\Phi_t$ is a product of Jacobians along the nominal trajectory: closed-loop Jacobians $f_{x_t} + f_{u_t}\pi_{x_t}$ in the case of $X_0$, and open-loop Jacobians $f_{x_t}$ in the case of $Y_0$. The time-averaged $\frac{1}{2}\log\det$ of any such sum equals the sum of positive Lyapunov exponents of the underlying dynamical system (Lemma~\ref{lemma:kse_equivalence}), independent of the choice of $W_t$ (Lemma~\ref{lemma:weighting_invariance}). Pesin's theorem then identifies this sum with KSE. See Appendix~\ref{app:value_decomp}.
\end{proof}

\end{lemma}

Lemma \ref{lemma:value_decomp} shows that open-loop and closed-loop information production rates are contained within Equation~\eqref{eq:dare}. By removing particular terms we can recover one form or the other. We can also show that the $\log\det$ of the Hessian itself grows at the closed-loop rate in Corollary \ref{corollary:closed_loop_value_hessian}.

\begin{corollary}[Closed-Loop Entropy Value Hessian Equivalence]\label{corollary:closed_loop_value_hessian}
Under the same assumptions as Lemma \ref{lemma:value_decomp}, the Hessian of the value function satisfies:
\begin{equation*}
\lim_{T\to\infty} \frac{1}{2T} \ln \det V_{xx_0}^\pi = h_{\mbox{ks}}(f^\mathbf{cl}).
\end{equation*}
\end{corollary}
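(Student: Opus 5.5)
The plan is to show that the value Hessian $V_{xx_t}^\pi$ obeys exactly the closed-loop recursion \eqref{eq:closed_loop_recursion} up to a bounded, positive-definite modification of the weighting. Once that is established, the unrolled form of $V_{xx_0}^\pi$ falls within the scope of Lemma~\ref{lemma:value_decomp} through Lemmas~\ref{lemma:kse_equivalence} and~\ref{lemma:weighting_invariance}. The key algebraic step is the standard ``Joseph form'' rewriting of the DARE \eqref{eq:dare}. Write $A_t = f_{x_t}$, $B_t = f_{u_t}$, $K_t = \pi_{x_t}$ and $S = V_{xx_{t+1}}^\pi$. Substituting the optimal gain \eqref{eq:policy_gradient}, $K_t = -(c_{uu_t}+B_t^\top S B_t)^{-1}B_t^\top S A_t$, and expanding gives
\begin{equation*}
V_{xx_t}^\pi = c_{xx_t} + K_t^\top c_{uu_t} K_t + (A_t + B_t K_t)^\top V_{xx_{t+1}}^\pi (A_t + B_t K_t).
\end{equation*}
To verify this, expand $(A+BK)^\top S(A+BK) + K^\top c_{uu}K = A^\top S A + 2A^\top S B K + K^\top(c_{uu}+B^\top S B)K$. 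Then use $(c_{uu}+B^\top SB)K = -B^\top S A$, which makes the last two terms combine to $-A^\top S B (c_{uu}+B^\top S B)^{-1}B^\top S A$. This recovers \eqref{eq:dare} exactly, and it is a routine check.

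Next, I unroll this recursion down to the terminal condition $V_{xx_T}^\pi = c_{xx_T}$. Let $\Phi_t = \prod_{k=0}^{t-1}(A_k + B_k K_k)$ be the closed-loop transition products, ordered consistently with Lemma~\ref{lemma:value_decomp}. This gives
\begin{equation*}
V_{xx_0}^\pi = \sum_{t=0}^{T}\Phi_t^\top \widetilde W_t \Phi_t,\qquad \widetilde W_t = c_{xx_t} + K_t^\top c_{uu_t}K_t \ (t<T),\quad \widetilde W_T = c_{xx_T}.
\end{equation*}
This has precisely the cumulative-sensitivity form used in the proof sketch of Lemma~\ref{lemma:value_decomp} for $X_0$. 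The Jacobians are the same closed-loop ones; only the weighting changes from $c_{xx_t}$ to $\widetilde W_t$.

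The remaining task is to check the hypotheses of Lemma~\ref{lemma:weighting_invariance}: $\widetilde W_t$ must be uniformly positive definite and uniformly bounded. The lower bound is immediate, since $\widetilde W_t \succeq c_{xx_t} \succeq \underline{c}\, I$ because $K_t^\top c_{uu_t}K_t \succeq 0$. The upper bound requires $\sup_t \|K_t\| < \infty$, and I expect this to be the main obstacle. From \eqref{eq:policy_gradient} and $c_{uu_t}\succeq \underline{c}\,I$ one gets the bound $\|K_t\| \le \|(c_{uu_t}+B^\top S B)^{-1}B^\top S^{1/2}\|\,\|S^{1/2}A\|$. This does not by itself stay bounded if $V_{xx_{t+1}}^\pi$ grows exponentially, which it does at the closed-loop rate.

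My first approach to the upper bound would be a more careful estimate. Using $(c_{uu}+B^\top S B)^{-1}B^\top S^{1/2} = (c_{uu}+M^\top M)^{-1}M^\top$ with $M = S^{1/2}B$, the first factor is bounded by $1/(2\sqrt{\underline{c}})$ uniformly in $S$. This still leaves the growth of $\|S^{1/2}A\|$. As a fallback I would adopt the regularity assumption, implicit in Lemma~\ref{lemma:value_decomp}'s ``uniformly bounded weighting'' and standard in Oseledets-type settings, that the feedback gains along the nominal trajectory are uniformly bounded (or grow subexponentially). Subexponential growth suffices, because then $\frac{1}{2T}\log$ of the ratio of the two weightings vanishes. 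Alternatively, I would sandwich $X_0 \preceq V_{xx_0}^\pi \preceq \kappa_T X_0$ with $\frac1T\log\kappa_T\to 0$. Either way, taking $\frac{1}{2T}\log\det$ and applying Lemmas~\ref{lemma:kse_equivalence} and~\ref{lemma:weighting_invariance} with Pesin's theorem yields $\lim_{T\to\infty}\frac{1}{2T}\ln\det V_{xx_0}^\pi = h_{\mbox{ks}}(f^{\mathbf{cl}})$.
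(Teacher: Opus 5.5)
Your proposal is essentially the paper's proof: the paper likewise writes $V_{xx_0}^\pi = L_0 + M_0$ (equivalently, the Joseph form you verify) as the closed-loop weighted sensitivity sum with $W_t = c_{xx_t} + \pi_{x_t}^\top c_{uu_t}\pi_{x_t}$, and then applies Lemma~\ref{lemma:weighting_invariance}. The one difference is that the paper simply asserts the $W_t$ are uniformly bounded, whereas you correctly flag that this requires bounded (or subexponentially growing) gains $\pi_{x_t}$ and make it an explicit hypothesis---a real point, since it does not follow from the Riccati recursion alone (in non-stabilizable systems the cross terms of $V_{xx}^\pi$, and hence $\pi_{x_t}$, can grow exponentially in $T$).
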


\begin{proof}[Proof sketch]
The DARE solution admits a cumulative sensitivity representation of the same form as in Lemma~\ref{lemma:value_decomp}, with closed-loop Jacobians $f_{x_t} + f_{u_t}\pi_{x_t}$ and uniformly bounded weighting $W_t = c_{xx_t} + \pi_{x_t}^\top c_{uu_t} \pi_{x_t}$. See Appendix~\ref{app:closed_loop_value_hessian}.
\end{proof}

Notably, Lemma~\ref{lemma:value_decomp} and Corollary~\ref{corollary:closed_loop_value_hessian} hold for any PD and bounded $c_{xx_t}$ and $c_{uu_t}$ which we prove in Lemma~\ref{lemma:weighting_invariance}. CIP therefore depends only on the dynamics, not on the choice of cost weighting.

\subsection{Definition of Controllable Information Production}

We can now formally define CIP from the open-loop KSE identified in Equation~\eqref{eq:open_loop_entropy}, and establish that it provides an upper bound on the closed-loop KSE.

\begin{definition}[Controllable Information Production]\label{def:cip}
Let $h_{\mbox{ks}}(f^\mathbf{ol})$ denote the open-loop KSE of a system under linearized dynamics. CIP is defined as the open-loop KSE along the nominal trajectory:
\begin{equation*}
\mathrm{CIP} \coloneq h_{\mbox{ks}}(f^\mathbf{ol}).
\end{equation*}
\end{definition}

\begin{theorem}[CIP Upper Bounds Closed-Loop Entropy]\label{theorem:controllable_information_production}
Under linearized dynamics and optimal perturbation regulation, CIP provides an 
upper bound on the closed-loop KSE:
\begin{equation*}
\mathrm{CIP} = h_{\mbox{ks}}(f^\mathbf{ol}) \geq h_{\mbox{ks}}(f^\mathbf{cl}).
\end{equation*}
\begin{proof}
From Definition \ref{def:aux_recursions} the matrix inequality: $X_0 \preccurlyeq V_{xx_0}^\pi \preccurlyeq Y_0$ holds at every horizon $T$. Since $\log\det$ is monotone on the Loewner order of PD matrices \citep{boyd2004convex}, this gives $\log\det X_0 \leq \log\det V_{xx_0}^\pi \leq \log\det Y_0$. Dividing by $2T$ and taking $T\to\infty$, Corollary \ref{corollary:closed_loop_value_hessian} and Lemma \ref{lemma:value_decomp} yield:
\begin{equation*}
h_{\mbox{ks}}(f^\mathbf{cl}) = \lim_{T\to\infty} \frac{1}{2T} \ln\det V_{xx_0}^\pi 
\leq h_{\mbox{ks}}(f^\mathbf{ol}) = \mathrm{CIP}. \qedhere
\end{equation*}
\end{proof}
\end{theorem}
CIP is guaranteed to be non-negative and upper-bounds the closed-loop KSE. The bound relies on the stabilization properties of the optimal first-order feedback controller and need not hold for a general policy Jacobian, such as one parameterized by a neural network.

\subsection{Efficient Estimation of Information Production}

In practice, propagating the recursion in Equation \eqref{eq:open_loop_recursion} is numerically unstable, growing exponentially at a rate proportional to the open-loop KSE. To overcome this, we propagate $Y_t$ in the $\log\det$ domain as a scalar recursion. By Lemma~\ref{lemma:weighting_invariance}, the asymptotic growth rate of $\log\det Y_t$ is independent of the choice of $c_{xx_t}$, so we set $c_{xx_t} = \mathbf{I}$ without loss of generality.

\begin{lemma}[Log-domain recursion]\label{lemma:log_domain}
Let $Y_t$ satisfy the recursion in Equation \eqref{eq:open_loop_recursion} with $c_{xx_t} = \mathbf{I}\;$ for all $t$. Then $\log\det Y_t$ and $Y_t^{-1}$ satisfy the joint recursion:
\begin{equation}\label{eq:log_domain} 
    \begin{aligned}
        \log\det Y_t &= \log\det Y_{t+1} + \log\det\!\left(Y_{t+1}^{-1} + f_{x_t} f_{x_t}^\top\right), 
        &\qquad \log\det Y_T &= 0, \\[4pt]
        Y_{t}^{-1} &= \mathbf{I} - f_{x_t}^\top \!\left(Y_{t+1}^{-1} + f_{x_t} f_{x_t}^\top\right)^{-1} f_{x_t}, 
        &\qquad Y_T^{-1} &= \mathbf{I}.
    \end{aligned}
\end{equation}

\begin{proof}
With $c_{xx_t} = \mathbf{I}$, Equation~\eqref{eq:open_loop_recursion} reduces to $Y_t = \mathbf{I} + f_{x_t}^\top Y_{t+1} f_{x_t}$ with $Y_T = \mathbf{I}$. Taking $\ln\det$ and applying Sylvester's identity yields the first equation; inverting and applying the Woodbury identity yields the second.
\end{proof}

\end{lemma}

\begin{figure}[h]
    \centering
    \includegraphics[width=\columnwidth]{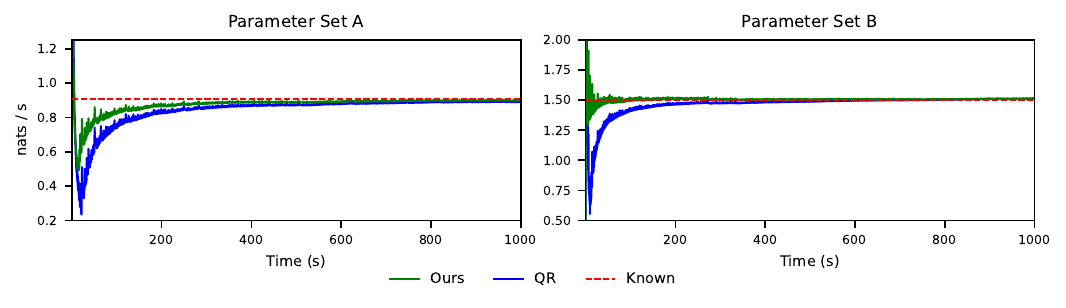}
    \caption{KSE estimation on the Lorenz system in two standard chaotic parameter regimes. \textbf{(A)} $\sigma=10,\; \rho=28,\;\beta=8/3$, with reference value $h_{\mbox{ks}}(f) \approx 0.906$ \citep{sprott_lorenz}. \textbf{(B)} $\sigma=16,\; \rho=45.92,\; \beta=4$, with reference value $h_{\mbox{ks}}(f) \approx 1.498$ \citep{wolf1985determining}. 
    Results are shown for the continuous-time analogue of the recursion in Equation~\eqref{eq:log_domain}.}
    \label{fig:lorenz_kse}
\end{figure}

The exponential growth of $Y_t$ underflows safely to zero in $Y_t^{-1}$, yielding a numerically stable estimator of the sum of positive Lyapunov exponents and bypassing the QR decomposition required by standard Lyapunov spectrum methods \citep{wolf1985determining, sandri1996numerical, skokos2009lyapunov}. Our estimator converges to the known value of KSE faster than the baseline QR decomposition method in Figure~\ref{fig:lorenz_kse} which demonstrates convergence in two standard chaotic parameter regimes. 

\section{Experiments}

The goal of our experiments is to evaluate whether IM objectives can drive the emergence of useful physical behaviors, rather than proxy objectives such as state coverage or prediction accuracy. We focus on behaviors such as self-righting, a prerequisite for effective locomotion and energy-efficient movement. Importantly, this notion of usefulness is orthogonal to prior IM objectives (such as diversity or curiosity), which target different properties and are not designed to produce task-relevant behavior directly. Our results show that CIP uniquely induces behaviors that are both physically meaningful and directly beneficial for control. In contrast, existing mutual-information-based methods optimize surrogate objectives (e.g., diversity, prediction error), which do not directly translate into useful physical skills. 

To isolate the role of CIP as an IM, we deliberately adopt a simple controller architecture. Specifically, we use a random-sampling-based model predictive control (MPC) scheme.
\begin{algorithm}
  \caption{MPC-based CIP Agent}
  \label{alg:cip_mpc}
  \begin{algorithmic}[1]
    \REQUIRE Initial state $x$, horizon $T$, control penalty $\beta$
    \REPEAT
        \STATE \textbf{Solve}: $\mathbf{u}_{0:T-1}^* = \arg\max_{\mathbf{u}_{0:T-1}} \; \mathcal{C}(x, \mathbf{u}_{0:T-1}) - \frac{\beta}{T}\|\mathbf{u}_{0:T-1}\|^2$ \citep{pinneri2020sampleefficientcrossentropymethodrealtime}
        \STATE \textbf{Execute}: $x \gets f(x, u_0^*)$
    \UNTIL{Until convergence at a CIP maximum}
  \end{algorithmic}
\end{algorithm}
Specifically, CIP is computed over a finite horizon $T$ using Equation~\eqref{eq:log_domain} which approximates the asymptotic entropy rate. We denote this estimate by: $\mathcal{C}(x_0, \mathbf{u}_{0:T-1}) \approx \textrm{CIP}$, where $x_0$ is the initial state at the start of the planning horizon and $\mathbf{u}_{0:T-1} := \{u_0, \ldots, u_{T-1}\}$ is a candidate control sequence. The overall MPC procedure is summarized in Algorithm \ref{alg:cip_mpc} which repeatedly solves an optimization problem with decision variable $\mathbf{u}_{0:T-1}$ and applies the first action in the environment. We use the Improved Cross-Entropy Method (iCEM) \cite{pinneri2020sampleefficientcrossentropymethodrealtime}, a stochastic optimization algorithm commonly used for MPC in continuous control tasks.


We evaluate on four benchmark systems (in Mujoco-MJX \citep{todorov2012mujoco}) that span a range of dynamical complexity:
\begin{itemize}
    \item \textbf{Cart Pole:} A single-link pendulum mounted on a horizontally sliding cart, underactuated with control applied only to the cart. The indirect actuation introduces delayed effects and serves as the simplest testbed in our suite.
    \item \textbf{Double Pendulum:} A two-link serial pendulum, underactuated with torque applied only at the base joint. Its chaotic dynamics test whether CIP can drive both swing-up and fine-grained stabilization through a single passive joint.
    \item \textbf{Triple Pendulum:} A three-link serial pendulum, fully actuated with a motor at each joint. Its strongly chaotic dynamics make it a stringent test of CIP in a higher-dimensional fully actuated regime.
    \item \textbf{Gibbon:} A planar humanoid suspended from fixed anchor points at the feet. This system tests CIP's scalability to high-dimensional underactuated dynamics; full self-righting from a hanging configuration has not, to our knowledge, been demonstrated previously with intrinsic motivation alone.
\end{itemize}

\begin{figure}
    \centering
    \includegraphics[width=\linewidth]{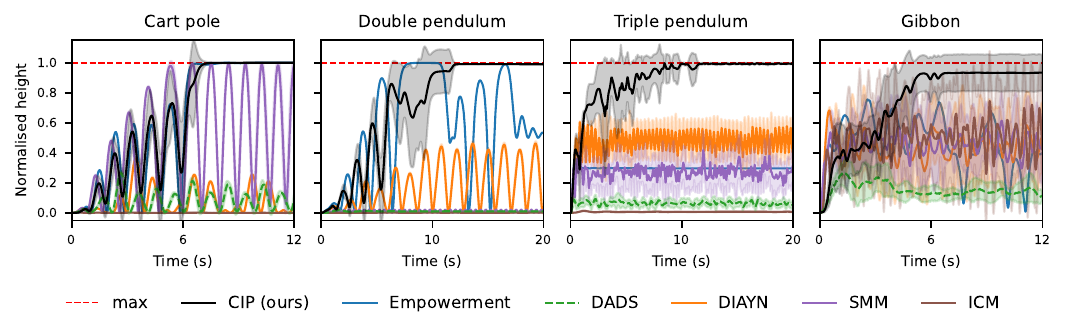}
    \caption{Mean height $\pm$ std of the extremity over time in each environment. For the pendulum environments the extremity is defined as the tip, for the gibbon the height of the head is considered as the extremity. The heights are normalized between $0.0$ and $1.0$ where the latter indicates the fully upright position. All runs start from the hanging position and are averaged over $10$ random seeds, except empowerment which runs deterministically. The skill-based IM (DADS, DIAYN, SMM) are evaluated using the skill with the highest average height. We use the default parameters for the baselines as suggested in their official implementations \citep{laskin2021urlb, sharma2019dynamics}.}
    \label{fig:combined_height}
\end{figure}

The DIAYN, SMM, and ICM methods are all evaluated using the open source Unsupervised Reinforcement Learning Benchmark (URLB) \citep{laskin2021urlb} implementation. The DADS algorithm is not available in URLB, so we use the official implementation by the authors \citep{sharma2019dynamics}. All algorithms are run using their default hyperparameter choices. To ensure a fair comparison between the our MPC-based method and the learning-based methods they are deterministically reset to the fully downright position during training . The empowerment algorithm is run using our custom implementation available in our code. In order to evaluate of the skill-based IM (DIAYN, DADS, and SMM) the skill with the highest average height is selected. All evaluation runs are conducted using that maximally performing skill. All experiments were conducted on a server with 2 H100 GPUs and 1.0 TB RAM. 

\begin{table}
  \centering
  \begin{tabular}{l>{\columncolor{gray!15}}cccccc}
    \toprule
    \multicolumn{1}{c}{} & \multicolumn{6}{c}{Intrinsic Motivation Objective} \\
    \cmidrule(r){2-7}
    System              & CIP (ours) & Empowerment & DIAYN & DADS & SMM & ICM \\
    \midrule
    Cart Pole         & $\mathbf{0.9996}$ & $\mathbf{0.9996}$           & $0.0221$   & $0.0684$     & $0.6171$ & $0.0005$ \\
    Double Pendulum   & $\mathbf{0.9913}$ & $0.5433$                    & $0.2339$   & $0.0107$     & $0.0112$ & $0.0016$ \\
    Triple Pendulum   & $\mathbf{0.9931}$ & $0.2963$                    & $0.4865$   & $0.0608$     & $0.2513$ & $0.0069$ \\
    Gibbon            & $\mathbf{0.9319}$ & $0.2835$                    & $0.4649$   & $0.1430$     & $0.4458$ & $0.5122$ \\
    \bottomrule
  \end{tabular}
  \vspace{1em}
  \caption{Mean extremity height over the final 10\% of the episode, averaged across 10 random seeds. Values are min-max normalized so that $0.0$ corresponds to the initial hanging configuration and $1.0$ corresponds to the fully upright pose.} \label{tab:results}
  \vspace{-2em}
\end{table}

\setlength{\intextsep}{4pt}
\begin{wrapfigure}{r}{2.8in}
  \centering
  \vspace{-6pt}
  \includegraphics[width=2.8in]{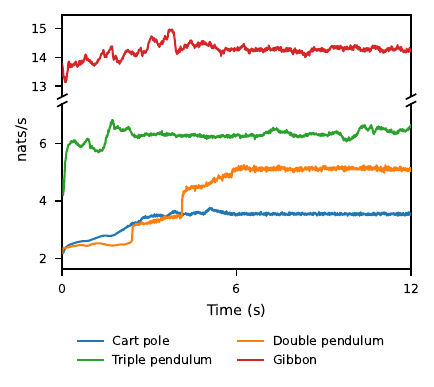}
  \caption{Estimated values of CIP (nats/s) over time for each environment during MPC runs.}
  \label{fig:rate}
  \vspace{-10pt}
\end{wrapfigure}

Across all environments only CIP consistently reaches the upright, salient, position as shown in Figure~\ref{fig:combined_height}. Without an extrinsic reward, CIP is able to quickly increase the height and maintain it in the unstable state, despite its reliance on a simple MPC controller. No other baseline aside from empowerment is able to stabilize in the upright position in any of the systems. Empowerment fully succeeds in self-righting in the cart pole environment and partially succeeds in double pendulum but does not stabilize, falling to a suboptimal salient state. Among the skill-based IM DIAYN appears to perform the best, consistently finding oscillatory behavior.

The results in Figure~\ref{fig:combined_height} are summarized in Table~\ref{tab:results} which shows the average height over the last $10\%$ of the episode across $10$ random seeds. In all environments CIP achieves over $90\%$ of the maximum height of the environment, confirming that being upright is indicative of high open-loop information production, where unregulated perturbations will grow the most.

We demonstrate typical values of CIP during MPC runs on each environment in Figure~\ref{fig:rate}, where each agent starts in the hanging down position and successfully self-rights by the end. The ranges of CIP demonstrate that the information production properties of each environment correspond to increasing complexity of the dynamical system, with the gibbon environment having significantly higher CIP than the others. Across all environments, the CIP value starts at a lower level in the hanging position and rapidly increases to a maximal value, indicating that the agent has stabilized in the upright position. The value range on each task is compact, differing by at most $3$ nats/s between the antipodal positions.

\section{Discussion}

This work introduces Controllable Information Production (CIP), a principled IM for continuous control. Unlike existing IM formulations, which are postulated from heuristic or biological inspiration, CIP emerges directly from OC theory. This grounding  provides efficient means for its computation. CIP quantifies the controllable chaos an agent produces in its environment, and we observe that maximizing it gives rise to behaviors that align closely with prior intuitions about how intrinsically motivated agents should act in dynamical control systems. Using a deliberately simple MPC-based controller, we show that CIP consistently drives agents toward salient states at the \emph{edge of chaos} across systems with different complexity.

The existing IM baselines failed to consistently self-right on our tasks. For skill-based motivators, one likely explanation lies in their core mechanism: they partition state space among latent skills, with each skill claiming a distinct region. This creates selective pressure for state-space diversity, but the upright configuration introduces a competing pressure of its own, as it is the most difficult state to maintain. With easier options available for achieving skill distinguishability, such as learning to hold several stable poses, it is unlikely that any single skill will specialize in reaching and stabilizing the upright state. Curiosity-based methods exhibit a different failure mode, as their rewards depend on an auxiliary predictor whose expressive capacity determines what counts as ``surprising.'' A sufficiently strong predictor learns quickly and ceases to be surprised, while a weaker one may fixate on inherently unpredictable dynamics, such as high-velocity uncontrolled motion \citep{burda2018exploration}.

Our method is promising but has limitations. First, its calculation relies on efficient access to the Jacobians of the dynamics to estimate the information production rate, which constrains the class of simulators where it can be applied to directly \citep{liang2018gpuacceleratedroboticsimulationdistributed, hu2019difftaichi, howell2022dojo}. Second, the scalability of our experiments are limited by the simplicity of the controller. Random-shooting MPC may scale to somewhat higher-dimensional environments than what we demonstrated, but its computational cost grows quickly and will eventually become intractable. MPC controllers are also known to be myopic, as they solve an optimization problem over a finite planning horizon \citep{tassa2012synthesis, pinneri2020sampleefficientcrossentropymethodrealtime}. This limitation means that sometimes the agent can get stuck in a local optimal solution.

In future work, we plan to extend CIP to learning-based controllers and train policies with state-of-the-art RL methods. We also intend to move away from differentiable simulators by adopting learned dynamical models, building on advances in world-model learning from transition data \citep{hansen2023td, hafner2023mastering, hafner2025training}. Finally, while this work emphasizes the connection between CIP and OC theory, demonstrating that KSE can be efficiently computed through the machinery of OC, KSE is natively defined through a refinement of partitions, as outlined in Section \ref{sec:kse}. A complementary direction is to bypass OC entirely and estimate KSE directly from its definition rather than Pesin's theorem, inspired by recent approaches \citep{murphy2024machine, SHIOZAWA2024129531}.

\paragraph{Impact Statement} This paper introduces a novel theoretical framework for IM in continuous environments. The development of theoretically grounded methods for IM has the potential to improve robot learning by reducing reliance on human-designed reward functions and demonstrations, which are often difficult to obtain. Agents driven by IM may exhibit unpredictable behavior due to the lack of explicit human guidance. In real-world robotic systems, such behavior could pose risks to human safety if deployed without appropriate oversight or safeguards. As a primarily theoretical contribution evaluated in simulated environments, the immediate societal impact of this work is limited. Any real-world effects would depend on downstream applications, integration with safety mechanisms, and deployment decisions made by practitioners.

\bibliographystyle{plainnat}
\bibliography{refs}


\appendix

\section{Appendix}

In this section we provide supporting derivations and full proofs of claims stated in the main text.

\subsection{Notation}
All vectors are column vectors. Let $\{\bar x_t, \bar u_t\}_t$ denote a nominal trajectory.
Perturbations are defined as $\delta x_t = x_t - \bar x_t$ and $\delta u_t = u_t - \bar u_t$.
We denote Jacobians of the dynamics by
$f_{x_t} := \frac{\partial f(\bar{x}_t, \bar{u}_t)}{\partial \bar{x}_t}$
and
$f_{u_t} := \frac{\partial f(\bar{x}_t, \bar{u}_t)}{\partial \bar{u}_t}$.
The policy Jacobian is
$\pi_{x_t} := \frac{\partial \pi(\bar{x}_t)}{\partial \bar{x}_t}$. The subsequent sensitivity and linearization results are expressed with respect to perturbations originating at time $s$.

\subsection{Preliminaries}\label{app:preliminaries}

\subsubsection{Policy-Aware Linearization}
We assume the existence of a feedback policy $\pi$ that selects future actions between the current state at time $s$ and the future state at time $t$. We choose a linearization which takes into account the total derivative of states in the future. This linearization includes the total perturbation in the final state due to a perturbation in the initial state which propagates through both the dynamics $f$ and the feedback policy $\pi$. Here, the total sensitivity of the state at time $t$ to the state at time $s$ is a product of the closed-loop derivatives:
\begin{equation}\label{eq:closed_loop_prod}
    \frac{d x_t}{d x_s} = \prod_{k = s}^{t-1}(f_{x_k} + f_{u_k} \pi_{x_k})
\end{equation}
where each closed-loop derivative corresponds to the total derivative of the closed-loop dynamics under the policy. If we consider that future controls and states can be expressed with a linear dependence on deviations in the current state and control then we obtain the following linearizations where $\delta x_s = x_s - \bar{x}_s$ and $\delta u_s = u_s - \bar{u}_s$. 
\begin{equation}\label{eq:state_control_linearizations}
    \begin{aligned}
        x_t &\approx \bar{x}_t + \frac{d x_t}{d x_{s+1}}(f_{x_s}\delta x_s + f_{u_s} \delta u_s) \\
        u_t &\approx \bar{u}_t + \frac{d u_t}{d x_{s+1}}(f_{x_s}\delta x_s + f_{u_s} \delta u_s)
    \end{aligned}
\end{equation}
The control linearization involves $\frac{d u_t}{d x_{s+1}} = \pi_{x_t}\frac{d x_t}{d x_{s+1}}$.

\subsubsection{Quadratic Cost Function}

We begin with an quadratic cost function which is dependent on the current state and control at time $s$:
\begin{equation}\label{eq:quadratic_cost}
    J(x_s, u_s) = \frac{1}{2}\sum_{t=s}^T e_t^\top c_{xx_t} e_t + \frac{1}{2}\sum_{t=s}^{T-1} u_t^\top c_{uu_t} u_t
\end{equation}
where $c_{xx_t}$ and $c_{uu_t}$ are PSD cost weighting matrices, Equation \eqref{eq:goal_dist} is the distance between the current state and goal, and Equation \eqref{eq:nominal_goal_dist} is the distance between the nominal state and goal. The cost is calculated over a time horizon from time $s$ to time $T$.
\begin{align}
    e_t &= x_t - g_t \label{eq:goal_dist} \\
    \bar{e}_t &= \bar{x}_t - g_t \label{eq:nominal_goal_dist}
\end{align}

We differentiate the cost in Equation \eqref{eq:quadratic_cost} with respect to the current control $u_s$ and insert the linearizations in Equation \eqref{eq:state_control_linearizations}. In Equation \eqref{eq:gradient_of_cost} the $\bar{e}_t$ term represents the difference between the nominal state $\bar{x}_t$ and the goal $g_t$ as shown in Equation \eqref{eq:nominal_goal_dist}.
\begin{equation}\label{eq:gradient_of_cost}
    \begin{aligned}
        &\frac{\partial J(x_s, u_s)}{\partial u_s} = \underbrace{\sum_{t=s+1}^T e_t^\top c_{xx_t}\frac{d x_t}{d x_{s+1}} f_{u_s} + \sum_{t=s+1}^{T-1} u_t^\top c_{uu_t}\frac{d u_t}{d x_{s+1}}f_{u_s} + u_s^\top c_{uu_s}}_{\textrm{Differentiate cost}} \\
        &\approx \sum_{t=s+1}^T \bar{e}_t^\top c_{xx_t} \frac{d x_t}{d x_{s+1}} f_{u_s} + \delta x_s^\top f_{x_s}^\top\bigg(\frac{d x_t}{d x_{s+1}}\bigg)^\top c_{xx_t} \frac{d x_t}{d x_{s+1}} f_{u_s} + \delta u_s^\top f_{u_s}^\top \bigg(\frac{d x_t}{d x_{s+1}}\bigg)^\top c_{xx_t}\frac{d x_t}{d x_{s+1}} f_{u_s} \\
        &\underbrace{+ \sum_{t=s+1}^{T-1}\bar{u}_t^\top c_{uu_t}\frac{d u_t}{d x_{s+1}} f_{u_s} + \delta x_s^\top f_{x_s}^\top\bigg(\frac{d u_t}{d x_{s+1}}\bigg)^\top c_{uu_t}\frac{d u_t}{d x_{s+1}} f_{u_s} + \delta u_s^\top f_{u_s}^\top \bigg(\frac{d u_t}{d x_{s+1}}\bigg)^\top c_{uu_t}\frac{d u_t}{d x_{s+1}} f_{u_s} + u_s^\top c_{uu_s}}_{\textrm{Insert linearizations}}
    \end{aligned}
\end{equation}

\subsubsection{Gradient of Value}

The gradient of the value function $V_{x_s}^\pi$ acts as a drift term which influences the nominal trajectory to minimize total cost. The gradient of value explicitly depends on both the distance from the goal and the magnitude of control power. An optimal controller will minimize both.
\begin{equation}\label{eq:value_gradient_solution}
    \begin{aligned}
        V_{x_s}^\pi &:= \bigg(\frac{d x_T}{d x_s}\bigg)^\top c_{xx_T} \bar{e}_T + \sum_{t = s}^{T-1}\bigg(\frac{d x_t}{d x_s}\bigg)^\top \bigg[c_{xx_t}\bar{e}_t + \pi_{x_t}^\top c_{uu_t}\bar{u}_t\bigg] \\
    \end{aligned}
\end{equation}
The backwards recursion of the value gradient is given by:
\begin{equation}\label{eq:value_gradient_recursion}
    V_{x_t}^\pi = \bigg(\frac{d x_{t+1}}{d x_t}\bigg)^\top V_{x_{t+1}}^\pi + \bigg[ c_{xx_t} \bar{e}_t + \pi_{x_t}^\top c_{uu_t} \bar{u}_t\bigg]
\end{equation}
with $V_{x_T}^\pi = c_{xx_T}\bar{e}_T$ as the terminal condition. Note the explicit dependency on Equation \eqref{eq:nominal_goal_dist}. 

\subsubsection{Hessian of Value}\label{app:hessian_of_value}

The Hessian of the value function is decomposed into two terms expressed as summations.
\begin{equation}\label{eq:L_and_M}
    L_{s} := \sum_{t=s}^T \bigg(\frac{d x_t}{d x_{s}}\bigg)^\top c_{xx_t}\frac{d x_t}{d x_{s}}, \qquad  M_{s} := \sum_{t=s}^{T-1} \bigg(\frac{d u_t}{d x_{s}}\bigg)^\top c_{uu_t}\frac{d u_t}{d x_{s}}
\end{equation}
The first term $L_s$ measures how a small perturbation $\delta x_s$ in the initial state induces curvature in the cumulative state cost component of Equation \eqref{eq:quadratic_cost}. The second term $M_s$ measures how the initial perturbation induces curvature in the cumulative control cost in Equation \eqref{eq:quadratic_cost}. Both terms in Equation \eqref{eq:L_and_M} can be calculated by backwards recursions in Equation \eqref{eq:L_and_M_recursions}.
\begin{equation}\label{eq:L_and_M_recursions}
    \begin{aligned}
        L_t &= \bigg(\frac{d x_{t+1}}{d x_t}\bigg)^\top L_{t+1} \frac{d x_{t+1}}{d x_t} + c_{xx_t}, \qquad &&L_T = c_{xx_T} \\
         M_t &= \bigg(\frac{d x_{t+1}}{d x_t}\bigg)^\top M_{t+1}\frac{d x_{t+1}}{d x_t} + \pi_{x_t}^\top c_{uu_t} \pi_{x_t}, \qquad &&M_T = \mathbf{0}
    \end{aligned}
\end{equation}

The Hessian of the optimal value function is simply the sum of $L_s$ and $M_s$.
\begin{equation}\label{eq:value_hessian_solution}
    V_{xx_s}^\pi = L_s + M_s
\end{equation}
Its recursion is the sum of the recursions in Equation \eqref{eq:L_and_M_recursions}.
\begin{equation}\label{eq:DARE_closed_loop}
    V_{xx_t}^\pi = \bigg(\frac{d x_{t+1}}{d x_t}\bigg)^\top V_{xx_{t+1}}^\pi \frac{d x_{t+1}}{d x_t} + \pi_{x_t}^\top c_{uu_t}\pi_{x_t} + c_{xx_t}, \qquad V_{xx_T}^\pi = c_{xx_T}
\end{equation}
Notably, for quadratic costs the value Hessian does not depend on the goal $g_t$. This is not a limitation, because in many practical solutions costs are approximated using a second order Taylor expansion \citep{tassa2012synthesis, zhang2025whole}.

\subsection{Proof of the Policy Decomposition Lemma}\label{app:policy_decomp}

We now prove Lemma \ref{lemma:policy_decomp} using the linearizations and value function recursions derived in Appendix \ref{app:preliminaries}.

\begin{proof}\label{proof:policy_decomp}

Substituting into Equation \eqref{eq:gradient_of_cost} the value gradient (Equation~\eqref{eq:value_gradient_solution}) and value Hessian (Equation~\eqref{eq:value_hessian_solution}) into the transposed cost gradient we get a more simplified expression.
\begin{equation*}
    \bigg(\frac{\partial J}{\partial u_s}\bigg)^\top \approx c_{uu_s} u_s + f_{u_s}^\top V_{x_{s+1}}^\pi + f_{u_s}^\top V_{xx_{s+1}}^\pi f_{x_s}\delta x_s + f_{u_s}^\top V_{xx_{s+1}}^\pi f_{u_s}\delta u_s
\end{equation*}
We can solve for the optimal policy by setting the gradient of the cost equal at zero. The resultant policy is written as:
\begin{equation*}
    \pi(x_s) = \bar{u}_s - (c_{uu_s} + f_{u_s}^\top V_{xx_{s+1}}^\pi f_{u_s})^{-1}(c_{uu_s}\bar{u}_s + f_{u_s}^\top V_{x_{s+1}}^\pi + f_{u_s}^\top V_{xx_{s+1}}^\pi f_{x_s} \delta x_s).
\end{equation*}
Which naturally decomposes into two terms. The extrinsic term $d_s$ is related to the goal due to its dependency on the gradient of value $V_{x_{s+1}}^\pi$ which explicitly depends on the goal as shown in Equation \eqref{eq:value_gradient_recursion}. The intrinsic term $\pi_{x_s}$ only depends on $V_{xx_{s+1}}^\pi$ which has no goal dependency as shown in Equation \eqref{eq:DARE_closed_loop}. We can rewrite the extrinsic feedforward term as $d_s$ and the intrinsic feedback term as $\pi_{x_s}$:
\begin{equation}
    \begin{aligned}\label{eq:intrinsic_extrinisc_split}
        d_s &=   \underbrace{- (c_{uu_s} + f_{u_s}^\top V_{xx_{s+1}}^\pi f_{u_s})^{-1}(c_{uu_s}\bar{u}_s + f_{u_s}^\top V_{x_{s+1}}^\pi)}_{\textrm{Extrinsic Component}} \\
        \pi_{x_s} &= \underbrace{- (c_{uu_s} + f_{u_s}^\top V_{xx_{s+1}}^\pi f_{u_s})^{-1}f_{u_s}^\top V_{xx_{s+1}}^\pi f_{x_s}}_{\textrm{Intrinsic Component}}
    \end{aligned}
\end{equation}
which yields the decomposition stated in Lemma~\ref{lemma:policy_decomp}. 
\end{proof}

\begin{corollary}[Recovery of the Riccati Equation]
Substituting the formula for the policy gradient in Equation \eqref{eq:intrinsic_extrinisc_split} into the backwards recursion for $V_{xx_t}^\pi$ in Equation \eqref{eq:DARE_closed_loop} we obtain DARE.
\begin{equation*}
    V_{xx_t}^\pi = c_{xx_t} + f_{x_t}^\top V_{xx_{t+1}}^\pi f_{x_t} - f_{x_t}^\top V_{xx_{t+1}}^\pi f_{u_t}(c_{uu_s} + f_{u_t}^\top V_{xx_{t+1}}^\pi f_{u_t})^{-1}f_{u_t}^\top V_{xx_{t+1}}^\pi f_{x_t}, \qquad V_{xx_T}^\pi = c_{xx_T}
\end{equation*}
\end{corollary}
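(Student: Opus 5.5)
The plan is to verify directly that substituting the feedback gain of Equation~\eqref{eq:intrinsic_extrinisc_split} into the closed-loop recursion of Equation~\eqref{eq:DARE_closed_loop} collapses to the stated Riccati form. The closed-loop derivative is $\frac{d x_{t+1}}{d x_t} = f_{x_t} + f_{u_t}\pi_{x_t}$ by Equation~\eqref{eq:closed_loop_prod}. To keep the algebra compact I will abbreviate $P := V_{xx_{t+1}}^\pi$, $A := f_{x_t}$, $B := f_{u_t}$, $R := c_{uu_t}$, $Q_{uu} := R + B^\top P B$ and $Q_{ux} := B^\top P A$. In this notation the gain reads $K := \pi_{x_t} = -Q_{uu}^{-1} Q_{ux}$. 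I take $Q_{uu}$ to be invertible, which holds since $R$ is PD and $P$ is PSD by Equation~\eqref{eq:value_hessian_solution}.

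Next I will expand the right-hand side of Equation~\eqref{eq:DARE_closed_loop}:
\begin{equation*}
(A+BK)^\top P (A+BK) + K^\top R K + c_{xx_t}
= c_{xx_t} + A^\top P A + K^\top Q_{ux} + Q_{ux}^\top K + K^\top Q_{uu} K .
\end{equation*}
Here the two quadratic-in-$K$ pieces $K^\top B^\top P B K$ and $K^\top R K$ merge into $K^\top Q_{uu} K$. Substituting $K$ gives $K^\top Q_{uu} K = Q_{ux}^\top Q_{uu}^{-1} Q_{ux}$. It also gives $K^\top Q_{ux} = Q_{ux}^\top K = -Q_{ux}^\top Q_{uu}^{-1} Q_{ux}$, using the symmetry of $Q_{uu}$, which holds because $P$ and $R$ are symmetric. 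Summing the three terms leaves $-Q_{ux}^\top Q_{uu}^{-1} Q_{ux} = -A^\top P B (R + B^\top P B)^{-1} B^\top P A$. This is exactly the subtracted term in DARE. The terminal condition $V_{xx_T}^\pi = c_{xx_T}$ carries over unchanged from Equation~\eqref{eq:DARE_closed_loop}. Induction backward in $t$ then shows that the two recursions generate the same sequence.

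There is no real obstacle here; the only care needed is to get the cross terms right. The gain is the minimizer of the quadratic $K \mapsto K^\top Q_{uu}K + 2\,\mathrm{sym}(K^\top Q_{ux})$, so completing the square is what makes the cancellation work. I would also note that the corollary uses $c_{uu_t}$ throughout, even though the statement prints $c_{uu_s}$, which I read as a typo.
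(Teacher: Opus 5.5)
Your proposal is correct and takes the same route as the paper, which simply substitutes the gain $\pi_{x_t}$ from Equation~\eqref{eq:intrinsic_extrinisc_split} into Equation~\eqref{eq:DARE_closed_loop}; you write out explicitly the cross-term cancellation that the paper leaves implicit. You are also right to read $c_{uu_s}$ in the statement as a typo for $c_{uu_t}$.
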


\subsection{Justification of Auxiliary Recursions Definition}\label{app:aux_recursions}

Here we provide a derivation of Definition \ref{def:aux_recursions}. First, start with the version of DARE shown in Equation \eqref{eq:DARE_closed_loop} written with closed-loop derivatives and remove (in red) the part associated with control cost. This preserves the closed-loop derivatives while removing irrelevant information about control magnitude.
\begin{equation*}
    V_{xx_t}^\pi = c_{xx_t} + {\color{red}\pi_{x_t}^\top c_{uu_t} \pi_{x_t}} + (f_{x_t} + f_{u_t}\pi_{x_t})^\top V_{xx_{t+1}}^\pi (f_{x_t} + f_{u_t}\pi_{x_t}), \qquad V_{xx_T}^\pi = c_{xx_T}
\end{equation*}
Next, take the standard form DARE in Equation \eqref{eq:dare} and remove (in red) the part associated with control entirely. This effectively measures open-loop perturbation growth along the nominal trajectory.
\begin{equation*}
    V_{xx_t}^\pi = c_{xx_t} + f_{x_t}^\top V_{xx_{t+1}}^\pi f_{x_t} - {\color{red}f_{x_t}^\top V_{xx_{t+1}}^\pi f_{u_t} (c_{uu_t} + f_{u_t}^\top V_{xx_{t+1}}^\pi f_{u_t})^{-1} f_{u_t}^\top V_{xx_{t+1}}^\pi f_{x_t}}, \qquad V_{xx_T}^\pi = c_{xx_T}
\end{equation*}
Now we have two separate backwards recursions:
\begin{align*}
    X_t &\coloneq c_{xx_t} + (f_{x_t} + f_{u_t}\pi_{x_t})^\top X_{t+1}(f_{x_t} + f_{u_t}\pi_{x_t}) \\
    Y_t &\coloneq c_{xx_t} + f_{x_t}^\top Y_{t+1} f_{x_t}
\end{align*}
with a shared terminal condition $X_T = Y_T = c_{xx_T}$.

\subsection{Supporting Lemmas for the Value Hessian Decomposition}\label{app:supporting_lemmas}

\begin{lemma}[Sensitivity-KSE Equivalence]\label{lemma:kse_equivalence}
Let $\Phi_t = \prod_{k=0}^{t-1} A_k$ be a product of Jacobians evaluated along a nominal trajectory of a dynamical system
$f$ with Lyapunov exponents $\lambda_i$, and define the cumulative sensitivity matrix:
\begin{equation*}
    S_0 = \sum_{t=0}^T \Phi_t^\top \Phi_t.
\end{equation*}
Then
\begin{equation*}
    \lim_{T\to\infty}\frac{1}{2T}\ln\det S_0 = \sum_{\lambda_i > 0} 
    \lambda_i = h_{\mbox{ks}}(f),
\end{equation*}
where $\lambda_i$ are the Lyapunov exponents of $f$ along the nominal trajectory and
the second equality is Pesin's theorem \citep{pesin1977characteristic}.
\end{lemma}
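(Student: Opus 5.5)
We will prove the statement by sandwiching $\ln\det S_0$ between two quantities whose growth rates are governed by the singular values of $\Phi_T$. The link is Oseledets' multiplicative ergodic theorem: along a typical nominal trajectory, the singular values $\sigma_1(\Phi_t)\ge\dots\ge\sigma_n(\Phi_t)$ satisfy $\frac{1}{t}\ln\sigma_i(\Phi_t)\to\lambda_i$. Since $\det(\Phi_t^\top\Phi_t)=\prod_i\sigma_i^2$, the behaviour of $\frac{1}{2T}\ln\det$ of a sum of Gram matrices is controlled by the singular values of its terms.

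\textbf{Lower bound.} $S_0\succcurlyeq I+\Phi_T^\top\Phi_T$, because the $t=0$ term is $\Phi_0=I$ and every other term is PSD. Monotonicity of $\ln\det$ in the Loewner order then gives
\[
\ln\det S_0\;\ge\;\sum_i \ln\bigl(1+\sigma_i(\Phi_T)^2\bigr)\;\ge\;\sum_{i:\lambda_i>0}2\ln\sigma_i(\Phi_T).
\]
Dividing by $2T$ and taking $\liminf$ yields $\liminf\ge\sum_{\lambda_i>0}\lambda_i$. The identity term is what guarantees $S_0\succ 0$ and discards the contracting directions, since $\ln(1+\sigma^2)\ge 0$.

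\textbf{Upper bound.} This is the main obstacle, because $\Phi_t^\top\Phi_t$ for different $t$ have different singular directions, so the eigenvalues of the sum are not simply those of the largest term. I would first try the determinant inequality for a sum of PSD matrices via the ordered eigenvalues. By Weyl/Ky Fan, for each $k$ the product of the $k$ largest eigenvalues of $S_0$ is at most $(T+1)^k\max_t\prod_{i\le k}\sigma_i(\Phi_t)^2$. Then:
\begin{itemize}
\item Oseledets gives $\prod_{i\le k}\sigma_i(\Phi_t)\le C_\epsilon e^{t(\lambda_1+\dots+\lambda_k+k\epsilon)}$.
\item Choosing $k$ equal to the number of positive exponents makes the maximum attained (up to $e^{\epsilon t}$) at $t=T$.
\item The remaining $n-k$ eigenvalues of $S_0$ are bounded by $\operatorname{tr} S_0\le (T+1)\max_t\|\Phi_t\|^2$. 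This is too crude, so I would refine it.
\end{itemize}
The refinement bounds the eigenvalues of $S_0$ restricted to the Oseledets slow subspace $E^{\le 0}$. There $\|\Phi_t v\|\le C_\epsilon e^{\epsilon t}\|v\|$, which gives polynomial (subexponential) growth. A block-determinant (Fischer/Schur complement) estimate with respect to the splitting $E^{>0}\oplus E^{\le 0}$ then gives
\[
\ln\det S_0\le 2T\Bigl(\sum_{\lambda_i>0}\lambda_i+O(\epsilon)\Bigr)+O(\ln T)+O(1),
\]
where the $O(1)$ constant depends on the angle between the subspaces, which stays bounded along a typical orbit by Oseledets regularity.

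Combining the two bounds and letting $\epsilon\to0$ gives $\lim\frac{1}{2T}\ln\det S_0=\sum_{\lambda_i>0}\lambda_i$. The second equality, with $h_{\mathrm{ks}}(f)$, is exactly Pesin's formula in Equation~\eqref{eq:sum_of_positive}, under its hypotheses: an SRB/smooth invariant measure and a $C^{1+\alpha}$ diffeomorphism. I would state these as standing assumptions, together with the integrability $\ln^+\|A_k\|\in L^1$ needed for Oseledets.
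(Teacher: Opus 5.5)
Your lower bound is correct: $S_0\succcurlyeq I+\Phi_T^\top\Phi_T$, combined with the singular-value form of Oseledets' theorem, gives $\liminf_{T\to\infty}\frac{1}{2T}\ln\det S_0\ge\sum_{\lambda_i>0}\lambda_i$. By Weyl monotonicity it even gives $\mu_i(S_0)\ge 1+\sigma_i(\Phi_T)^2$ for every $i$.

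The gap is in the upper bound. The inequality you attribute to Weyl/Ky Fan, $\prod_{i\le k}\mu_i(S_0)\le (T+1)^k\max_t\prod_{i\le k}\sigma_i(\Phi_t)^2$, is false for $k\ge 2$. Ky Fan controls \emph{sums} of the top eigenvalues of a sum of PSD matrices. Products (volumes) pick up cross-time contributions, and Minkowski's determinant inequality points the other way. Take $T=1$, $n=k=2$, $A_0=\mathrm{diag}(M,1/M)$. Then $\det S_0=(1+M^2)(1+M^{-2})>M^2$, while the right-hand side equals $4$, so the bound fails once $M\ge 2$.

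The Fischer refinement does not repair this. It disposes of the slow block, but the fast block $\det\bigl(\sum_t U^\top\Phi_t^\top\Phi_t U\bigr)$ is the same determinant-of-a-sum problem in dimension $k$, and your sketch bounds it only through the false inequality. Under invertibility it can be closed by backward contraction on $E^{>0}$ with tempered constants, giving $\sum_t U^\top\Phi_t^\top\Phi_t U\preccurlyeq C_\epsilon e^{2\epsilon T}(\Phi_T U)^\top\Phi_T U$. That step is absent from your proposal.

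The missing idea is the one the paper's proof rests on: argue eigenvalue by eigenvalue with the min-max theorem and Oseledets subspaces. The paper writes $\lim\frac1T\ln\mu_i(S_0)=\lim\frac1T\ln\sum_t\|\Phi_t v_i\|^2$ and evaluates the geometric series according to the sign of $\lambda_i$. For the upper direction this is easy to make rigorous, and it needs neither the splitting nor invertibility. Let $V_i$ be an $(n-i+1)$-dimensional subspace of the Oseledets filtration space of vectors with exponent at most $\lambda_i$. On this finite-dimensional space $\|\Phi_t v\|\le C_\epsilon e^{t(\lambda_i+\epsilon)}\|v\|$ uniformly, so Courant--Fischer gives
\[
\mu_i(S_0)\le\max_{v\in V_i,\ \|v\|=1}\ \sum_{t=0}^T\|\Phi_t v\|^2\le C_\epsilon^2\,(T+1)\,e^{2T\max(\lambda_i+\epsilon,\,0)}.
\]
Sum $\ln\mu_i(S_0)$ over $i$, divide by $2T$, let $T\to\infty$ and then $\epsilon\to 0$. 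This gives $\limsup_{T\to\infty}\frac{1}{2T}\ln\det S_0\le\sum_{\lambda_i>0}\lambda_i$.

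Your slow-subspace bound is this argument in disguise: the case $i=k+1$ with $V_{k+1}=E^{\le 0}$ already bounds $\mu_{k+1},\dots,\mu_n$, with no Fischer estimate needed. The fix is to do the same for each $i\le k$, rather than bounding the product of the top $k$ eigenvalues in one stroke.

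Combined with your lower bound, this yields $\frac1T\ln\mu_i(S_0)\to 2\max(\lambda_i,0)$ for each $i$, which is exactly the paper's per-eigenvalue claim. Your singular-value argument supplies the lower half more cleanly than a min-max over fast Oseledets subspaces would.
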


\begin{proof}
By the spectral property of the determinant, $\ln \det S_0 = \sum_{i=1}^{d_x} 
\ln \mu_i(S_0)$, where $\mu_i(S_0)$ are the eigenvalues. By Oseledets' multiplicative 
ergodic theorem \citep{oseledets1968multiplicative}, the state space decomposes into 
invariant subspaces $E_i$ associated with Lyapunov exponents $\lambda_i$:
\begin{equation*}
    \mathbb{R}^{d_x} = E_1 \oplus \cdots \oplus E_{d_x}, 
    \qquad
    \left\| \Phi_t v_i \right\| \sim e^{t\lambda_i},
    \quad v_i \in E_i.
\end{equation*}
By the min-max theorem the $i$th eigenvalue of $S_0$ satisfies:
\begin{equation*}
    \lim_{T\to\infty} \frac{1}{T} \ln \mu_{i}(S_0) 
    = \lim_{T\to\infty} \frac{1}{T}\ln \sum_{t=0}^{T} 
    \left\| \Phi_t v_i\right\|^2 
    = \lim_{T\to\infty} \frac{1}{T} \ln \sum_{t=0}^T e^{2t\lambda_i}.
\end{equation*}
Evaluating the geometric series $\sum_{t=0}^T e^{2t\lambda_i} = 
\frac{1 - e^{2(T+1)\lambda_i}}{1 - e^{2\lambda_i}}$ 
for each possible sign of $\lambda_i$:
\begin{equation*}
    \lim_{T\to\infty} \frac{1}{T} \ln \mu_{i}(S_0) = \begin{cases}
        \lim_{T\to\infty} \dfrac{1}{T}\ln\!\bigg(\dfrac{1}{1 - e^{2\lambda_i}}
        \bigg) = 0, \quad & \lambda_i < 0 \\[10pt]
        \lim_{T\to\infty} \dfrac{1}{T} \ln (T + 1) = 0, \quad & \lambda_i = 0 \\[10pt]
        \lim_{T\to\infty} \dfrac{1}{T} \ln\!\bigg(\dfrac{1 - e^{2(T+1)\lambda_i}}
        {1 - e^{2\lambda_i}}\bigg) = 2\lambda_i, 
        \quad & \lambda_i > 0.
    \end{cases}
\end{equation*}
Only the positive Lyapunov exponents survive, so summing over all dimensions and 
applying Pesin's theorem \citep{pesin1977characteristic},
\begin{equation*}
    \lim_{T\to\infty}\frac{1}{2T} \ln \det S_0 
    = \sum_{\lambda_i > 0} \lambda_i
    = h_{\mbox{ks}}(f). \qedhere
\end{equation*}
\end{proof}

\begin{lemma}[Weighting Invariance]\label{lemma:weighting_invariance}
Let $\{W_t\}$ be a sequence of PD matrices with uniformly bounded
eigenvalues,
\begin{equation*}
    0 < \alpha = \inf_t \mu_{\min}(W_t) \leq \beta = \sup_t \mu_{\max}(W_t) < \infty.
\end{equation*}
Let $\Phi_t = \prod_{k=0}^{t-1} A_k$ be a product of Jacobians of a dynamical system 
$f$ with Lyapunov exponents $\lambda_i$, and define: $S_0 = \sum_{t=0}^T \Phi_t^\top 
\Phi_t$. Then
\begin{equation*}
    \lim_{T\to\infty}\frac{1}{2T}\ln\det\bigg(\sum_{t=0}^T 
    \Phi_t^\top W_t \Phi_t \bigg) = \sum_{\lambda_i > 0}\lambda_i = h_{\mbox{ks}}(f).
\end{equation*}
\end{lemma}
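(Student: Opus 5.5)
The plan is a sandwich argument in the Loewner order. Each term satisfies $\Phi_t^\top W_t \Phi_t \succcurlyeq \alpha\,\Phi_t^\top \Phi_t$, because $W_t \succcurlyeq \alpha \mathbf{I}$ and congruence preserves the Loewner order. Likewise $\Phi_t^\top W_t \Phi_t \preccurlyeq \beta\,\Phi_t^\top\Phi_t$, since $W_t \preccurlyeq \beta\mathbf{I}$. Summing over $t=0,\dots,T$ gives
\begin{equation*}
\alpha\, S_0 \preccurlyeq \sum_{t=0}^T \Phi_t^\top W_t \Phi_t \preccurlyeq \beta\, S_0 .
\end{equation*}

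Next I take $\ln\det$ of all three terms. The $t=0$ term is $\Phi_0 = \mathbf{I}$, so $S_0 \succcurlyeq \mathbf{I}$ and all three matrices are PD; $\ln\det$ is therefore finite. Because $\ln\det$ is monotone on the Loewner order of PD matrices (as already used in Theorem~\ref{theorem:controllable_information_production}), and $\det(cA) = c^{d_x}\det A$, we get
\begin{equation*}
d_x\ln\alpha + \ln\det S_0 \le \ln\det\Big(\sum_{t=0}^T \Phi_t^\top W_t\Phi_t\Big) \le d_x \ln\beta + \ln\det S_0 .
\end{equation*}

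Dividing by $2T$, the additive constants $d_x\ln\alpha/(2T)$ and $d_x\ln\beta/(2T)$ vanish as $T\to\infty$, because $\alpha,\beta$ do not depend on $T$. The two outer expressions then both converge to $\lim_{T\to\infty}\frac{1}{2T}\ln\det S_0$, which Lemma~\ref{lemma:kse_equivalence} identifies as $\sum_{\lambda_i>0}\lambda_i = h_{\mbox{ks}}(f)$. By the squeeze theorem, the middle limit exists and takes the same value.

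The only delicate point is where the uniform bounds are used. The argument needs $\alpha>0$ and $\beta<\infty$ to hold uniformly in $t$, so that the comparison constants do not grow with $T$. If instead $\mu_{\min}(W_t)$ decayed exponentially, the lower constant would contribute a nonvanishing rate after division by $2T$, and the squeeze would fail. This is exactly why the hypothesis is stated as a uniform bound. Everything else, including existence of the limit for $S_0$, is inherited from Lemma~\ref{lemma:kse_equivalence}.
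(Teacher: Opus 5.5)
Your proof is correct and follows essentially the same route as the paper. Both use the Loewner sandwich $\alpha S_0 \preceq \sum_{t=0}^T \Phi_t^\top W_t \Phi_t \preceq \beta S_0$, then monotonicity of $\ln\det$ with the offsets $d_x\ln\alpha/(2T)$ and $d_x\ln\beta/(2T)$ vanishing, and finally the squeeze theorem with Lemma~\ref{lemma:kse_equivalence} supplying the limit of $\frac{1}{2T}\ln\det S_0$; your remarks that $\Phi_0=\mathbf{I}$ ensures positive definiteness and on why $\alpha,\beta$ must be uniform in $t$ are correct minor refinements of that same argument.
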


\begin{proof}
The eigenvalue bounds give $\alpha \mathbf{I} \preceq W_t \preceq \beta 
\mathbf{I}$ for all $t$, and therefore
\begin{equation*}
    \alpha S_0 \preceq \sum_{t=0}^T \Phi_t^\top W_t \Phi_t \preceq \beta S_0.
\end{equation*}
Taking the time-averaged log-determinant and applying monotonicity of $\ln\det$,
\begin{equation*}
    \frac{n\ln\alpha}{2T} + \frac{1}{2T}\ln\det S_0 \leq
    \frac{1}{2T}\ln\det\bigg(\sum_{t=0}^T \Phi_t^\top W_t \Phi_t\bigg) \leq
    \frac{n\ln\beta}{2T} + \frac{1}{2T}\ln\det S_0.
\end{equation*}
As $T\to\infty$, the terms $\frac{n\ln\alpha}{2T}$ and $\frac{n\ln\beta}{2T}$ vanish
since $\alpha$ and $\beta$ are constants, and $\frac{1}{2T}\ln\det S_0 \to 
h_{\mbox{ks}}(f)$ by Lemma \ref{lemma:kse_equivalence} applied to $f$. Both bounds 
therefore converge to $h_{\mbox{ks}}(f)$, and the result follows by the squeeze theorem.
\end{proof}

\subsection{Proof of the Value Hessian Decomposition Lemma}\label{app:value_decomp}

\begin{proof}
From Equation \eqref{eq:open_loop_recursion}, the solution of the open-loop recursion 
can be written as
\begin{equation*}
    Y_0 = \sum_{t=0}^T \left(\frac{\partial x_t}{\partial x_0}\right)^\top c_{xx_t} 
    \frac{\partial x_t}{\partial x_0}, \qquad
    \frac{\partial x_t}{\partial x_0} = \prod_{k=0}^{t-1} f_{x_k}.
\end{equation*}
This is a weighted sensitivity summation with $W_t = c_{xx_t}$. Since cost Hessians 
are uniformly bounded and PD by assumption, the result:
\begin{equation*}
    \lim_{T\to\infty}\frac{1}{2T}\ln\det Y_0 = h_{\mbox{ks}}(f^\mathbf{ol})
\end{equation*}
follows from Lemma \ref{lemma:weighting_invariance}.

The same argument applies to the closed-loop recursion in Equation 
\eqref{eq:closed_loop_recursion}, which unrolls as
\begin{equation*}
    X_0 = \sum_{t=0}^T \left(\frac{d x_t}{d x_0}\right)^\top c_{xx_t} 
    \frac{d x_t}{d x_0}, \qquad
    \frac{d x_t}{d x_0} = \prod_{k=0}^{t-1} (f_{x_k} + f_{u_k}\pi_{x_k}),
\end{equation*}
with the same weights $W_t = c_{xx_t}$ but with closed-loop Jacobians instead. Lemma \ref{lemma:weighting_invariance} then gives:
\begin{equation*}
    \lim_{T\to\infty}\frac{1}{2T}\ln\det X_0 = h_{\mbox{ks}}(f^\mathbf{cl}). \qedhere
\end{equation*}
\end{proof}

\subsection{Proof of Closed-Loop Entropy Value Hessian Equivalence Corollary}\label{app:closed_loop_value_hessian}

\begin{proof}
The solution to DARE can be written by adding the solutions to $L_s$ and $M_s$ in 
Equation \eqref{eq:L_and_M}:
\begin{equation*}
    V_{xx_0}^\pi = \sum_{t=0}^{T}\bigg(\frac{d x_t}{d x_0}\bigg)^\top W_t
    \frac{d x_t}{d x_0}, \qquad
    W_t = c_{xx_t} + \pi_{x_t}^\top c_{uu_t} \pi_{x_t}, \quad W_T = c_{xx_T},
\end{equation*}
where the Jacobians $\frac{d x_t}{d x_0} = \prod_{k=0}^{t-1}(f_{x_k} + f_{u_k}
\pi_{x_k})$ are closed-loop. The weights $W_t$ are uniformly bounded and positive 
definite, so the result follows directly from Lemma \ref{lemma:weighting_invariance} applied to the closed-loop system:
\begin{equation*}
    \lim_{T\to\infty}\frac{1}{2T}\ln\det V_{xx_0}^\pi = h_{\mbox{ks}}(f^\mathbf{cl}).
    \qedhere
\end{equation*}
\end{proof}





\subsection{Hyperparameters}\label{app:hyperparameters}

\begin{table}[H]
    \caption{Hyperparameters of the iCEM planner~\citep{pinneri2020sampleefficientcrossentropymethodrealtime} used to optimize our CIP objective across all tasks. Horizon, shots, iterations, and elite fraction follow the notation of \citet{pinneri2020sampleefficientcrossentropymethodrealtime}; $\rho$ is the autocorrelation coefficient of the AR(1) colored-noise process used for action sampling, smoothing refers to the momentum coefficient on the mean of the sampling distribution, and $\beta$ is the control cost penalty defined earlier.}
  \label{tab:hyperparameters}
  \centering
  \begin{tabular}{lccccccc}
    \toprule
    Task              & $\beta$ & Horizon & Shots & Iterations & Elite Fraction & Smoothing & $\rho$ \\
    \midrule
    Cart Pole         & $0.0$ & $400$ & $512$ & $1$ & $0.1$ & $0.1$ & $0.9$ \\
    Double Pendulum   & $0.0$ & $512$ & $1024$ & $10$ & $0.1$ & $0.1$ & $0.9$ \\
    Triple Pendulum   & $2.5$ & $128$ & $2048$ & $10$ & $0.1$ & $0.1$ & $0.9$ \\
    Gibbon            & $9.0$ & $512$ & $1024$ & $1$ & $0.2$ & $0.1$ & $0.9$ \\
    \bottomrule
  \end{tabular}
\end{table}


\end{document}